\documentclass[11pt]{amsart}

\usepackage[a4paper,margin=0.92in]{geometry}
\usepackage[T1]{fontenc}
\usepackage[utf8]{inputenc}
\usepackage{lmodern}
\usepackage{microtype}
\usepackage{amsmath,amssymb,amsfonts,amsthm,mathtools,bm}
\usepackage{booktabs,tabularx,array,multirow}
\usepackage{graphicx}
\usepackage{enumitem}
\usepackage[hidelinks]{hyperref}
\usepackage[nameinlink,noabbrev]{cleveref}

\allowdisplaybreaks
\setlist[itemize]{leftmargin=*,itemsep=1pt,topsep=2pt}
\setlist[enumerate]{leftmargin=*,itemsep=1pt,topsep=2pt}
\newcolumntype{Y}{>{\raggedright\arraybackslash}X}

\newtheorem{theorem}{Theorem}[section]
\newtheorem{proposition}[theorem]{Proposition}
\newtheorem{corollary}[theorem]{Corollary}
\newtheorem{lemma}[theorem]{Lemma}
\theoremstyle{definition}

\theoremstyle{remark}

\newcommand{\R}{\mathbb R}
\newcommand{\U}{\mathcal U}
\newcommand{\Pcal}{\mathcal P}
\newcommand{\B}{\mathbb B}
\newcommand{\rank}{\operatorname{rank}}
\newcommand{\row}{\operatorname{row}}

\newcommand{\eps}{\varepsilon}
\newcommand{\ceil}[1]{\left\lceil #1\right\rceil}
\newcommand{\norm}[1]{\left\lVert #1\right\rVert}
\newcommand{\rnum}{r}

\title[Task-information--state frontiers]
{What Must Survive?\\
Exact Task-Information--State Frontiers Under Partial Task Revelation}

\author{Ronald Katende}
\address{Department of Mathematics, Kabale University, Kabale, Uganda}
\email{rkatende@kab.ac.ug}
\date{}

\keywords{task-relative sufficiency, retained state, side information, functional compression, linear sketching, reduced-order models, task-aware memory}
\subjclass[2020]{68Q32, 68T07, 65Y20, 94A15}

\begin{document}

\begin{abstract}
A state may need to be compressed before its exact downstream task is known. For a finite family of linear task operators $\{T_u\}_{u\in\U}$, we characterize exactly how much state must survive when only one of $K$ coarse task messages is available at compression time:
\[
p^*(K)=
\min_{\substack{\Pcal\text{ partition of }\U\\|\Pcal|\le K}}
\max_{C\in\Pcal}\rank(T_C),
\]
where $T_C$ stacks the tasks that remain unresolved within cell $C$. Thus task information reduces retained state precisely by separating tasks whose joint task-visible subspace is expensive. We derive the dual bit frontier, an irreducible common-core floor, and a sharp singular-value characterization for nonzero tolerance. For overlapping task families, a cumulative overlap deficit places the exact frontier within an additive $\Delta$ of weighted load balancing. Optimal partitioning is strongly NP-hard, yet direct-sum families admit classical approximation guarantees and a PTAS.

We stress-test the theory on learned task operators from CIFAR-10, Fashion-MNIST, SVHN, and Burgers observables. For Burgers with $K=2$ and $\eps=0.35$, a constructive partition attains $p=4$, while balanced random partitions require median $p=6$, with held-out worst standardized RMSE $0.0368$. The results distinguish uniform task sufficiency from distributional and decision-level performance.
\end{abstract}

\maketitle

\section{Problem and information order}

Compression is usually priced after the downstream task or workload has already been fixed. Predictive-state representations, functional compression, task-aware inference, workload sketches, and task-conditioned memory all exploit some form of downstream relevance \cite{LittmanSuttonSingh2001,FeiziMedard2009,Nakanoya2021,Woodruff2014,Zhou2025,Huang2026}. A different regime appears when the state must be formed first, some coarse information about its future use is available, and the exact use is disclosed only later. The missing object is then not a single-task sufficient statistic and not one sketch for a known workload. It is the frontier between \emph{how much of the future task is known now} and \emph{how much state must survive until the task is known exactly}.

We isolate that question in the smallest model in which it can be answered sharply. Let $\U=\{1,\ldots,m\}$ and let
\[
T_u:\R^n\to\R^{r_u},\qquad u\in\U,
\]
be linear task operators. The source state satisfies $x\in\B_2^n$. Before compression, an oracle reveals a deterministic advice symbol $a(u)\in[K]$, with $K\le2^b$. The encoder knows $a(u)$ but not $u$. After compression, the exact task is revealed and a task-specific decoder must return $T_ux$. For an advice cell $C=a^{-1}(j)$, the encoder is a continuous map $\phi_C:\B_2^n\to\R^p$ and the decoder for task $u\in C$ is $\psi_u:\R^p\to\R^{r_u}$. The retained-state budget is the common coordinate dimension $p$.

The information order is the point. If $u$ were known before compression, the encoder could retain only task $u$. If $u$ were never revealed, all tasks in a cell would have to be decoded jointly. Here the encoder must preserve every distinction that can still matter inside the unresolved advice cell, but no distinction that is irrelevant to all tasks in that cell.

For $C\subseteq\U$, define the stacked operator
\[
T_Cx=(T_ux)_{u\in C},
\]
with Euclidean product norm. Let $p^*(K)$ be the least $p$ permitting exact recovery for all tasks using at most $K$ advice symbols, and set $p_b^*=p^*(\min\{2^b,m\})$. For $\tau\ge0$, write
\[
\rnum_\tau(L)=\#\{j:\sigma_j(L)>\tau\}.
\]

The contribution is one precise law and its consequences. The exact frontier is the minimum, over task partitions allowed by the advice alphabet, of the largest stacked rank. The approximate frontier is trapped between two singular-value partition problems, and the $\sqrt{|C|}$ separation between them is unavoidable without further task structure. A common task subspace remains as an irreducible floor; a bounded cumulative-overlap parameter quantifies how close a general family is to the weighted direct-sum law. Finding the optimal advice partition is strongly NP-hard, but in the direct-sum regime the problem becomes identical-machine makespan scheduling and therefore admits efficient approximation schemes. The empirical section does not ``validate'' these proofs; it asks whether the same geometry remains informative when the task operators come from learned representations and a scientific field.

\section{Exact and approximate frontiers}

We first isolate the classical single-operator width fact that will be used as machinery.

\begin{lemma}[Euclidean continuous width]\label{lem:width}
Let $L:\R^n\to\R^q$ be linear.  Among continuous encoders $\phi:\B_2^n\to\R^p$ and continuous decoders $\psi:\R^p\to\R^q$,
\[
 \inf_{\phi,\psi}\sup_{\norm{x}_2\le1}
 \norm{Lx-\psi(\phi(x))}_2
 =\sigma_{p+1}(L).
\]
In particular, exact recovery is possible iff $p\ge\rank(L)$.
\end{lemma}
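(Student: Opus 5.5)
The plan is to prove the two inequalities separately; the ``in particular'' statement then follows at once.

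\emph{Upper bound.} Take a singular value decomposition $L=\sum_j \sigma_j u_j v_j^\top$. Use the linear encoder $\phi(x)=(v_1^\top x,\ldots,v_p^\top x)$ and the linear decoder $\psi(z)=\sum_{j\le p}\sigma_j z_j u_j$. Then $\psi(\phi(x))=L_px$, the best rank-$p$ truncation, and
\[
\sup_{\norm{x}_2\le1}\norm{Lx-L_px}_2=\sigma_{p+1}(L)
\]
by the Eckart--Young theorem in the operator norm. If $p\ge\rank(L)$, this error is zero, which gives exact recovery.

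\emph{Lower bound.} This is the main step, because $\phi$ and $\psi$ are only assumed continuous, not linear. If $\sigma_{p+1}(L)=0$ there is nothing to prove, so assume $\sigma:=\sigma_{p+1}(L)>0$. Let $V=\mathrm{span}(v_1,\ldots,v_{p+1})$, a $(p+1)$-dimensional subspace on which $\norm{Lx}_2\ge\sigma\norm{x}_2$. Let $S$ be the unit sphere of $V$, which is homeomorphic to $S^{p}$.

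Restrict $\phi$ to $S$, giving a continuous map $S\to\R^p$. By the Borsuk--Ulam theorem there is a point $x_0\in S$ with $\phi(x_0)=\phi(-x_0)$. Both $\pm x_0$ lie in $\B_2^n$ and produce the same code, so they receive the same reconstruction $y=\psi(\phi(x_0))$. Hence
\[
2\sigma\le\norm{Lx_0-(-Lx_0)}_2\le\norm{Lx_0-y}_2+\norm{L(-x_0)-y}_2,
\]
and one of the two errors is at least $\sigma$. The worst-case error of any continuous pair is therefore at least $\sigma_{p+1}(L)$. Together with the upper bound, the infimum equals $\sigma_{p+1}(L)$ and is attained.

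\emph{Exact recovery.} If $p<\rank(L)$, then $\sigma_{p+1}(L)>0$, so the error is strictly positive and exact recovery fails. If $p\ge\rank(L)$, the upper-bound construction already gives exact recovery.

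The only nontrivial ingredient is the Borsuk--Ulam step. Its one subtlety is making sure $\phi$ is defined on all of $S$; this holds because $S\subset\B_2^n$.
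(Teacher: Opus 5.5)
Your proof is correct and follows the paper's argument: the truncated SVD gives the upper bound, and the lower bound comes from Borsuk--Ulam applied to $\phi$ on the unit sphere of $\mathrm{span}(v_1,\ldots,v_{p+1})$, followed by the triangle inequality at the antipodal pair $\pm x_0$. You are a bit more explicit than the paper: you verify $\norm{Lx}_2\ge\sigma_{p+1}(L)\norm{x}_2$ on that subspace and dispose of the case $\sigma_{p+1}(L)=0$ separately (and $\norm{L-L_p}=\sigma_{p+1}(L)$ follows directly from the SVD, so Eckart--Young is not needed).
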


\begin{proof}
The truncated singular-value decomposition gives the upper bound.  For the lower bound, restrict to the unit sphere in the span of the first $p+1$ right singular vectors.  Borsuk--Ulam gives $x$ with $\phi(x)=\phi(-x)$.  A common decoded value cannot lie within distance $<\norm{Lx}_2$ of both $Lx$ and $-Lx$, while $\norm{Lx}_2\ge\sigma_{p+1}(L)$.  The exact statement follows by setting the error to zero.  Equivalently, the lower bound is the linear-image specialization of the continuous nonlinear-width lower bound of DeVore--Howard--Micchelli \cite[Theorem~3.1]{DeVoreHowardMicchelli1989}; in Hilbert space the corresponding Bernstein/Kolmogorov widths are the singular values, as developed systematically by Pinkus \cite[Chapter~IV]{Pinkus1985}.
\end{proof}

\begin{theorem}[Exact task-information--state frontier]\label{thm:exact}
For every advice alphabet size $1\le K\le m$,
\begin{equation}
\boxed{
 p^*(K)
 =
 \min_{\substack{\Pcal\text{ partition of }\U\\|\Pcal|\le K}}
 \max_{C\in\Pcal}\rank(T_C).}
\label{eq:frontier}
\end{equation}
Consequently,
\[
 p_b^*=p^*(\min\{2^b,m\}).
\]
Equivalently, if
\[
 \kappa(p):=
 \min\Bigl\{|\Pcal|:\Pcal\text{ partitions }\U,
 \ \max_{C\in\Pcal}\rank(T_C)\le p\Bigr\},
\]
then the minimum number of task-advice bits required for a $p$-dimensional exact state is
\begin{equation}
\boxed{b^*(p)=\ceil{\log_2\kappa(p)}.}
\label{eq:dualfrontier}
\end{equation}
\end{theorem}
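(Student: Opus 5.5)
The plan is to reduce the frontier to one application of Lemma~\ref{lem:width} per advice cell.

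Any advice map $a:\U\to[K]$ induces the partition $\Pcal_a=\{a^{-1}(j)\neq\emptyset\}$ of $\U$, with at most $K$ cells. Conversely, any partition with at most $K$ cells is realized by some advice map. The argument then has two halves, one per inequality.

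\emph{Lower bound.} Fix an advice map $a$ and encoders and decoders achieving exact recovery with budget $p$. Fix a cell $C\in\Pcal_a$, with encoder $\phi_C$ and decoders $\psi_u$ for $u\in C$. Define the stacked decoder
\[
\Psi_C(y)=(\psi_u(y))_{u\in C},
\]
which is continuous whenever the $\psi_u$ are. Exact recovery of every $T_ux$, $u\in C$, from the single code $\phi_C(x)$ says exactly that $\Psi_C\circ\phi_C=T_C$ on $\B_2^n$. The exact part of Lemma~\ref{lem:width}, applied to $L=T_C$, then forces $p\ge\rank(T_C)$. Since this holds for every cell, $p\ge\max_{C\in\Pcal_a}\rank(T_C)$. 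Minimizing over $a$ gives $p^*(K)\ge$ the right side of \eqref{eq:frontier}.

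There is a subtlety if the decoders $\psi_u$ are not assumed continuous. The model only requires continuity of $\phi_C$, but the Borsuk--Ulam argument in Lemma~\ref{lem:width} uses only continuity of $\phi$. In the lower bound one takes $x$ on the sphere of a $(\rank T_C)$-dimensional subspace on which $T_C$ is injective; if $p<\rank T_C$, it finds $x$ with $\phi_C(x)=\phi_C(-x)$. Exact decoding would then give $T_Cx=T_C(-x)$, hence $T_Cx=0$, contradicting injectivity. I will note this explicitly so that the lower bound needs no decoder regularity.

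\emph{Upper bound.} Take a minimizing partition $\Pcal$ with $|\Pcal|\le K$ and set $p=\max_C\rank(T_C)$. Choose the advice $a$ that labels the cells of $\Pcal$. For each cell $C$, let $r_C=\rank(T_C)$ and let $V_C\in\R^{r_C\times n}$ have orthonormal rows spanning $\row(T_C)$. Set $\phi_C(x)=(V_Cx,0)\in\R^p$, padding with zeros. Each $T_u$ with $u\in C$ is a block of $T_C$, so $\ker V_C\subseteq\ker T_u$, and hence $T_u=T_uV_C^\top V_C$. The linear decoder $\psi_u(y)=T_uV_C^\top y_{1:r_C}$ therefore recovers $T_ux$ exactly. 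Thus $p^*(K)\le$ the right side.

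\emph{Dual form.} The identity $p_b^*=p^*(\min\{2^b,m\})$ is immediate from the definition, since using more than $m$ symbols never helps (cells are nonempty). For \eqref{eq:dualfrontier}, observe that $p^*(K)\le p$ iff there is a partition with at most $K$ cells and max rank at most $p$, i.e., iff $\kappa(p)\le K$. Here I use the monotonicity that having more allowed cells never hurts, since a partition with fewer cells is still admissible. With $b$ bits the admissible $K$ is $2^b$, so feasibility is $2^b\ge\kappa(p)$, and the least such $b$ is $\ceil{\log_2\kappa(p)}$.

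The only delicate step is the lower bound. The issue there is that exactness must be enforced jointly on the stacked operator from a single code, together with the regularity assumptions on decoders. The plan handles this through the Borsuk--Ulam kernel argument above.
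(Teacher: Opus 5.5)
Your proposal is correct and follows the paper's proof. For the lower bound you apply \cref{lem:width} to the stacked operator $T_C$ in each cell; for the upper bound you use a rank factorization with zero padding (your $T_C=(T_CV_C^\top)V_C$ is one such factorization); and you get the dual form by inverting $p^*(K)\le p \iff \kappa(p)\le K$. Your remark that the Borsuk--Ulam step needs only continuity of $\phi_C$ is a worthwhile refinement, since the paper's model assumes continuity only for encoders while \cref{lem:width} is stated for continuous decoders as well.
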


\begin{proof}
An advice map with at most $K$ values is exactly a partition $\Pcal$ of $\U$ into at most $K$ cells.  Fix one cell $C$.  If every $T_u$, $u\in C$, factors through the same $p$-dimensional state, then their product map $T_C$ also factors through that state.  By \cref{lem:width}, exact recovery therefore requires $p\ge\rank(T_C)$.  Hence every advice partition needs
$p\ge\max_{C\in\Pcal}\rank(T_C)$.

Conversely, for each cell $C$, choose a rank factorization $T_C=D_CE_C$ with $E_C:\R^n\to\R^{\rank(T_C)}$.  Store $E_Cx$ and, after $u$ is revealed, select the corresponding output block of $D_CE_Cx$.  Padding lower-dimensional cells with zeros gives a common state dimension $\max_C\rank(T_C)$.  Minimizing over partitions with at most $K$ cells proves \eqref{eq:frontier}; the $b$-bit statement follows by $K=\min(2^b,m)$, and \eqref{eq:dualfrontier} is the same statement inverted.
\end{proof}

The theorem turns future-task uncertainty into a concrete geometric object: the optimal partition of the task family under the rank of its joint row span.  Two task families with the same number of tasks and the same rank for every individual task may therefore have entirely different frontiers.

\begin{corollary}[Common-core floor and load-balancing envelope]\label{cor:core}
Let $W_u=\row(T_u)$ and
\[
 V:=\bigcap_{u\in\U}W_u,\qquad c:=\dim V,\qquad
 R:=\rank(T_\U),\qquad r_{\max}:=\max_u\rank(T_u).
\]
For each task let $\overline W_u:=W_u/V$ and $g_u:=\dim\overline W_u=\rank(T_u)-c$.  Define the weighted $K$-cell makespan
\begin{equation}
 M_K(g):=
 \min_{\substack{\Pcal\text{ partition of }\U\\|\Pcal|\le K}}
 \max_{C\in\Pcal}\sum_{u\in C}g_u.
 \label{eq:makespan}
\end{equation}
Then
\begin{equation}
\boxed{
 \max\left\{r_{\max},\ c+\ceil{\frac{R-c}{K}}\right\}
 \le p^*(K)\le c+M_K(g).}
\label{eq:coreenvelope}
\end{equation}
If the quotient innovations are globally direct,
\[
 \bigoplus_{u\in\U}\overline W_u,
\]
then the upper bound is exact:
\begin{equation}
\boxed{p^*(K)=c+M_K(g).}
\label{eq:weightedcoreexact}
\end{equation}
In particular, if every $g_u=g$, then
\begin{equation}
\boxed{p^*(K)=c+g\ceil{\frac{m}{K}}.}
\label{eq:coreexact}
\end{equation}
Thus advice can serialize task-specific innovations, but it cannot remove the common $c$-dimensional floor.
\end{corollary}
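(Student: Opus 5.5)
Everything will be reduced to Theorem~\ref{thm:exact}, so that $p^*(K)$ is a minimum over partitions of $\max_C\rank(T_C)$. I will use the identity $\rank(T_C)=\dim\row(T_C)=\dim\sum_{u\in C}W_u$, since the row space of a stacked operator is the sum of the row spaces of its blocks. All bounds then become dimension counts for sums of subspaces that contain $V$.

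\emph{Lower bound.} Fix any partition $\Pcal$ with $|\Pcal|\le K$. Every task $u$ lies in some cell $C$, and $W_u\subseteq\sum_{v\in C}W_v$. Hence $\max_C\rank(T_C)\ge r_{\max}$.

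For the second term, note that $V\subseteq W_u$ for every $u$, so $V\subseteq S_C:=\sum_{u\in C}W_u$ for every cell. Passing to the quotient by $V$ gives $\sum_C \dim(S_C/V)\ge\dim\bigl(\sum_C S_C/V\bigr)=R-c$. Therefore some cell has $\dim(S_C/V)\ge (R-c)/K$. Since dimensions are integers, this cell has $\rank(T_C)\ge c+\ceil{(R-c)/K}$. Minimizing over $\Pcal$ gives the left side of \eqref{eq:coreenvelope}.

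\emph{Upper bound.} Take $\Pcal$ attaining $M_K(g)$. For each cell, subadditivity in the quotient gives
\[
\dim(S_C/V)=\dim\Bigl(\sum_{u\in C}\overline W_u\Bigr)\le\sum_{u\in C}g_u .
\]
This yields $\rank(T_C)\le c+\sum_{u\in C}g_u\le c+M_K(g)$. Theorem~\ref{thm:exact} then gives $p^*(K)\le c+M_K(g)$.

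\emph{Exactness under directness.} Suppose the $\overline W_u$ form a direct sum in $\R^n/V$. Then every subfamily is also direct, so $\dim(S_C/V)=\sum_{u\in C}g_u$ exactly, and $\rank(T_C)=c+\sum_{u\in C}g_u$ for every cell of every partition. Substituting into \eqref{eq:frontier} turns the minimax into $c+M_K(g)$ identically, which proves \eqref{eq:weightedcoreexact}.

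\emph{Equal weights.} When every $g_u=g$, one has $M_K(g)=g\cdot\min_{\Pcal}\max_C|C|$. By pigeonhole, some cell has at least $\ceil{m/K}$ tasks, and splitting $\U$ into cells of sizes differing by at most one attains this value. (If $K>m$, empty cells are simply not used, and the formula still holds because $\ceil{m/K}=1$.) This gives \eqref{eq:coreexact}.

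The only point I expect to need care is justifying $\rank(T_C)=c+\dim(S_C/V)$, that is, that $V\subseteq S_C$ for every nonempty cell. This holds because $V$ is the intersection over \emph{all} tasks, so it sits inside every $W_u$. The remaining steps are standard dimension arithmetic.
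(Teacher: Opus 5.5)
Your proposal is correct and follows essentially the same route as the paper: reduce to \cref{thm:exact}, quotient each cell's row span by the common core $V$, and use subadditivity of quotient dimensions both for the pigeonhole lower bound and for the makespan upper bound, with global directness turning that subadditivity into equality on every cell. Your explicit checks that $V\subseteq S_C$ for every cell, and of the $K>m$ edge case, spell out what the paper's short proof leaves implicit.
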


\begin{proof}
Every cell span contains $V$. Passing to the quotient by $V$, the total quotient dimension $R-c$ is at most the sum of the cell quotient dimensions, so one cell has quotient dimension at least $\ceil{(R-c)/K}$; the single-task lower bound gives $r_{\max}$. For the upper bound, for every cell $C$,
\[
 \dim\Bigl(\sum_{u\in C}\overline W_u\Bigr)
 \le \sum_{u\in C}g_u,
\]
so any weighted partition with makespan $M$ gives retained dimension at most $c+M$. Minimizing yields \eqref{eq:coreenvelope}. Under global directness the displayed inequality is equality for every cell, giving \eqref{eq:weightedcoreexact}; equal weights are minimized by a balanced partition.
\end{proof}

The direct-sum hypothesis is sufficient but not necessary for the load-balancing description to remain accurate.  The next quantity measures the total linear dependence that can accumulate inside one unresolved advice cell.

\begin{proposition}[Bounded-overlap near-tightness]\label{prop:overlap}
With the notation of \cref{cor:core}, define the cumulative overlap deficit
\begin{equation}
 \Delta:=\max_{C\subseteq\U}
 \left\{\sum_{u\in C}g_u-
 \dim\Bigl(\sum_{u\in C}\overline W_u\Bigr)\right\}.
 \label{eq:overlapdeficit}
\end{equation}
Then for every $K$,
\begin{equation}
\boxed{
 \max\{r_{\max},\ c+M_K(g)-\Delta\}
 \le p^*(K)\le c+M_K(g).}
\label{eq:overlapbracket}
\end{equation}
Hence a family whose task-specific quotient spaces have at most $\Delta$ dimensions of cumulative redundancy in any cell lies within additive $\Delta$ of the weighted direct-sum frontier.  The case $\Delta=0$ recovers \eqref{eq:weightedcoreexact}.
\end{proposition}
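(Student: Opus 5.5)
The plan is to obtain the upper bound and the $r_{\max}$ term of the lower bound directly from \cref{cor:core}, and to prove only the new lower bound $p^*(K)\ge c+M_K(g)-\Delta$. All three rest on one identity. For any cell $C\subseteq\U$ we have $\row(T_C)=\sum_{u\in C}W_u$. Every $W_u$ contains $V$, so this sum contains $V$ as well. Passing to the quotient gives
\[
 \rank(T_C)=c+\dim\Bigl(\sum_{u\in C}\overline W_u\Bigr).
\]
I would state this identity first, since both directions of \eqref{eq:overlapbracket} then become cellwise statements about quotient dimensions.

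For the lower bound, fix an optimal partition $\Pcal$ attaining $p^*(K)$ in \eqref{eq:frontier}, which \cref{thm:exact} provides. For each cell $C\in\Pcal$, the definition \eqref{eq:overlapdeficit} of $\Delta$ applied to this particular subset gives
\[
 \dim\Bigl(\sum_{u\in C}\overline W_u\Bigr)\ge \sum_{u\in C}g_u-\Delta .
\]
Combining this with the identity yields $\rank(T_C)\ge c+\sum_{u\in C}g_u-\Delta$. Taking the maximum over $C\in\Pcal$ gives
\[
 p^*(K)\ge c+\max_{C\in\Pcal}\sum_{u\in C}g_u-\Delta\ge c+M_K(g)-\Delta,
\]
because $\Pcal$ is one of the partitions into at most $K$ cells over which $M_K(g)$ minimizes. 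The bound $p^*(K)\ge r_{\max}$ holds because some cell contains the task achieving $r_{\max}$, and its rank is at most $\rank(T_C)$ for that cell.

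The upper bound $p^*(K)\le c+M_K(g)$ is already part of \eqref{eq:coreenvelope}. To reproduce it, take a partition achieving $M_K(g)$ and use subadditivity of dimension under sums. When $\Delta=0$ the two bounds coincide. Global directness makes every cell deficit zero, so $\Delta=0$ and the case reduces to \eqref{eq:weightedcoreexact}.

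The only delicate point is the quotient identity. One has to check that $\row(T_C)$ equals the sum of the $W_u$, which holds because stacking operators takes the sum of their row spaces. One also has to check that quotienting by $V$ subtracts exactly $c$, which holds because $V\subseteq W_u$ for every $u$, so $V$ lies inside each cell's span. A second point to get right is the order of the extrema. The bound must apply $\Delta$ cell by cell before maximizing over cells, and must compare against the minimizing partition only at the last step, so that the argument uses only the single worst-case deficit and not a sum of deficits over cells.
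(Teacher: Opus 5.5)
Your proposal is correct and follows essentially the same route as the paper. The paper likewise writes $\rank(T_C)=c+\dim\bigl(\sum_{u\in C}\overline W_u\bigr)\ge c+\sum_{u\in C}g_u-\Delta$ cell by cell, maximizes over cells, and minimizes over partitions (your fixing an optimal partition is equivalent to this). It takes the upper bound and the $r_{\max}$ term from \cref{cor:core}.
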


\begin{proof}
For every cell $C$, the definition of $\Delta$ gives
\[
 \rank(T_C)=c+\dim\Bigl(\sum_{u\in C}\overline W_u\Bigr)
 \ge c+\sum_{u\in C}g_u-\Delta.
\]
Taking the maximum over cells and then the minimum over partitions yields the lower bound; the upper bound is \cref{cor:core}.
\end{proof}

The variational characterization is explicit, but the optimal advice code need not be computationally easy to find.

\begin{corollary}[Optimal advice partition is strongly NP-hard]\label{cor:nphard}
Given an explicitly represented linear task family and an advice budget $K$, deciding whether $p^*(K)\le B$ is strongly NP-complete, even when the task row spaces are pairwise disjoint coordinate subspaces and there is no common core.
\end{corollary}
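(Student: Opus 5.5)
Membership and hardness are handled separately; hardness comes from a reduction from a strongly NP-complete scheduling/partition problem, using the direct-sum case of \cref{cor:core}.

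\emph{Membership in NP.} A certificate is an advice map $a:\U\to[K]$, that is, a partition $\Pcal$ with at most $K$ cells. For each cell $C$ we form the stacked matrix $T_C$ from the explicitly given (rational) entries. Its rank is computed in polynomial time by Gaussian elimination over $\mathbb{Q}$, whose bit sizes stay polynomially bounded (Bareiss elimination). We then check $\max_C\rank(T_C)\le B$. By \cref{thm:exact} this check holds for some certificate iff $p^*(K)\le B$.

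\emph{Hardness.} We reduce from \textsc{3-Partition}, which is strongly NP-complete. An instance consists of positive integers $a_1,\dots,a_{3K}$ with $\sum_i a_i=KB$ and $B/4<a_i<B/2$. Since the problem is strongly NP-complete, we may assume the $a_i$ are bounded by a polynomial in the input length, i.e.\ they are given in unary.

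Set $n=KB$ and split the coordinates of $\R^n$ into consecutive disjoint blocks $S_i$ with $|S_i|=a_i$. Let $T_i:\R^n\to\R^{a_i}$ be the coordinate projection onto $S_i$, for $i=1,\dots,3K=m$. Because $n$ is polynomial in the unary size, this family is explicitly representable in polynomial time.

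The row spaces $W_i=\operatorname{span}\{e_j:j\in S_i\}$ are pairwise disjoint coordinate subspaces. Hence $V=\bigcap_i W_i=0$ (we have $m\ge3\ge2$), so $c=0$ and there is no common core. The quotients are then just the $W_i$, which form a global direct sum with $g_i=a_i$. By \eqref{eq:weightedcoreexact}, $p^*(K)=M_K(a)$.

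It remains to check that $M_K(a)\le B$ iff the \textsc{3-Partition} instance is a yes-instance.
\begin{itemize}
\item Since the total weight is $KB$, any partition into at most $K$ cells with makespan at most $B$ must have exactly $K$ cells, each with sum exactly $B$.
\item The bounds $B/4<a_i<B/2$ force each such cell to contain exactly three items.
\item Conversely, a valid 3-partition gives makespan $B$.
\end{itemize}
So $p^*(K)\le B$ iff the instance is solvable.

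The main obstacle is the \emph{strong} qualifier. The reduction must be pseudo-polynomial-safe: the ambient dimension $n=KB$ is polynomial only because \textsc{3-Partition} stays hard under unary encoding. I will state this explicitly. I will also note that the matrices have $0/1$ entries, so every number in the constructed instance is bounded by a polynomial, which is exactly what strong NP-completeness requires.
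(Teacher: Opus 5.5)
Your proposal is correct and follows essentially the same route as the paper. NP membership comes from guessing the partition and computing stacked ranks, and hardness from the same disjoint-coordinate-block reduction from \textsc{3-Partition} with $K$ equal to the number of triples, where $\rank(T_C)=\sum_{i\in C}a_i$ and unary-bounded $a_i$ keep the ambient dimension polynomial. Routing the rank identity through \eqref{eq:weightedcoreexact} and spelling out why makespan at most $B$ forces exact triples are only presentational differences.
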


\begin{proof}
Membership in NP follows by supplying a partition and computing the ranks of its stacked operators.  For hardness, reduce from strongly NP-complete \textsc{3-Partition} \cite{GareyJohnson1979}.  Given a \textsc{3-Partition} instance with integers $a_1,\ldots,a_{3J}$ satisfying $\sum_i a_i=JB$ and $B/4<a_i<B/2$, create $3J$ tasks whose row spaces are mutually disjoint coordinate subspaces $V_i$ with $\dim V_i=a_i$, and set the advice budget to $K:=J$.  Then for every cell $C$,
\[
\rank(T_C)=\sum_{i\in C}a_i.
\]
Hence $p^*(J)\le B$ exactly when the tasks can be partitioned into $J$ triples of sum $B$.  Strong NP-hardness makes the explicit ambient dimension $\sum_i a_i$ polynomially bounded in the restricted instances, so the reduction has polynomial size.
\end{proof}

Strong NP-hardness does not mean that every structured instance is algorithmically hopeless.  In the heterogeneous direct-sum regime, the partition problem is exactly a classical scheduling problem.

\begin{proposition}[Approximation in heterogeneous direct-sum families]\label{prop:ptas}
Assume the globally direct quotient structure of \cref{cor:core}.  Then minimizing retained state is identical, after subtracting the irreducible core $c$, to scheduling jobs of sizes $g_1,\ldots,g_m$ on $K$ identical machines to minimize makespan.  Consequently, longest-processing-time-first (LPT) produces a partition satisfying, for $K\ge2$,
\begin{equation}
 p_{\rm LPT}(K)-c
 \le \left(\frac43-\frac{1}{3K}\right)\bigl(p^*(K)-c\bigr),
 \label{eq:lpt}
\end{equation}
and for every fixed $\eta>0$ there is a polynomial-time algorithm with
\begin{equation}
 p_{\eta}(K)-c\le(1+\eta)\bigl(p^*(K)-c\bigr).
 \label{eq:ptas}
\end{equation}
Thus the exact advice design can be strongly NP-hard while still admitting a polynomial-time approximation scheme on this natural structural class.
\end{proposition}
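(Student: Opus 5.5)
The argument has two parts: an exact identity between the retained-state objective and makespan, and a transfer of the classical scheduling guarantees through that identity.

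\emph{Identity.} Under global directness of $\bigoplus_u\overline W_u$, fix any partition $\Pcal$ and any cell $C$. I would show that $\rank(T_C)=c+\sum_{u\in C}g_u$ directly.

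1. Since $V\subseteq W_u$ for every $u$, the cell row space $\row(T_C)=\sum_{u\in C}W_u$ contains $V$.
2. Hence $\rank(T_C)=c+\dim\bigl(\sum_{u\in C}\overline W_u\bigr)$.
3. The quotient subspaces $\overline W_u$, $u\in C$, are a subfamily of a globally direct family, so their sum is direct. Its dimension is therefore $\sum_{u\in C}g_u$.

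Taking the maximum over cells, the retained dimension of $\Pcal$ is $c+\mathrm{makespan}(\Pcal;g)$, where $g_u$ is the job size and cells are machines. This holds for every partition, not only optimal ones. It follows that $p^*(K)-c=M_K(g)$, which is consistent with \eqref{eq:weightedcoreexact}. It also follows that any partition produced by a scheduling algorithm has retained state exactly $c$ plus its makespan.

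The input sizes need some care. Partitions with fewer than $K$ nonempty cells correspond to schedules that leave some machines idle, so the feasible sets match. The numbers $g_u$ are computable in polynomial time from the explicit operators by rank computations: $c$ is the dimension of an intersection of row spaces, obtained via null-space computations. This makes the reduction polynomial. I expect this preprocessing point to be the only real subtlety, since everything else is quoted.

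\emph{Guarantees.}

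1. \emph{LPT.} Graham's LPT bound states that the LPT makespan is at most $\bigl(\tfrac43-\tfrac1{3K}\bigr)$ times the optimal makespan on $K$ identical machines. Substituting $p_{\rm LPT}(K)-c$ and $p^*(K)-c$ for the two makespans via the identity gives \eqref{eq:lpt}. The bound is meaningful for $K\ge2$; for $K=1$ everything is trivial.
2. \emph{PTAS.} For fixed $\eta$, the Hochbaum--Shmoys PTAS for $P\|C_{\max}$ returns a schedule of makespan at most $(1+\eta)M_K(g)$ in time polynomial in $m$. It applies whether $K$ is part of the input or fixed. Applying the identity gives \eqref{eq:ptas}.
3. \emph{Contrast with hardness.} \cref{cor:nphard} already lies inside this direct-sum class with $c=0$. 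So exact optimization in this class is strongly NP-hard, while the approximation scheme still applies, which is the closing remark of the statement.
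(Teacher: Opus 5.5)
Your proof is correct and follows the paper's route: identify the retained-state objective with $P\|C_{\max}$ via the direct-sum structure, then invoke Graham's LPT bound and the Hochbaum--Shmoys PTAS. You make explicit the per-partition identity $\rank(T_C)=c+\sum_{u\in C}g_u$, rather than relying only on the optimal-value identity \eqref{eq:weightedcoreexact}, and you note that the $g_u$ can be computed in polynomial time; these are exactly the details needed to transfer the guarantees to the partitions the algorithms actually output.
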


\begin{proof}
By \eqref{eq:weightedcoreexact}, subtracting $c$ leaves exactly the objective
$\min_{|\Pcal|\le K}\max_C\sum_{u\in C}g_u$, the identical-machine makespan problem $P||C_{\max}$.  The LPT guarantee is Graham's bound \cite{Graham1969}; the polynomial-time approximation scheme is due to Hochbaum and Shmoys \cite{HochbaumShmoys1987}. Adding the fixed core $c$ back gives \eqref{eq:lpt}--\eqref{eq:ptas}.
\end{proof}

Exact rank is not enough for approximate systems, so the second main result prices a nonzero tolerance.

\begin{theorem}[Robust approximate frontier]\label{thm:approx}
For a cell $C$, define the optimal branchwise error
\[
 E_p^{\max}(C):=
 \inf_{\phi_C,\{\psi_u\}}
 \sup_{\norm{x}_2\le1}\max_{u\in C}
 \norm{T_ux-\psi_u(\phi_C(x))}_2.
\]
Then
\begin{equation}
 \frac{\sigma_{p+1}(T_C)}{\sqrt{|C|}}
 \le E_p^{\max}(C)
 \le \sigma_{p+1}(T_C).
\label{eq:sandwich}
\end{equation}
Let $p^*_{K,\eps}$ be the least retained dimension achieving branchwise error at most $\eps$ with an advice alphabet of size at most $K$.  Then
\begin{equation}
\boxed{
 \min_{\substack{|\Pcal|\le K}}
 \max_{C\in\Pcal}\rnum_{\sqrt{|C|}\eps}(T_C)
 \ \le\ p^*_{K,\eps}\ \le
 \min_{\substack{|\Pcal|\le K}}
 \max_{C\in\Pcal}\rnum_{\eps}(T_C).}
\label{eq:approxfrontier}
\end{equation}
The $b$-bit quantity is $p^*_{\min(2^b,m),\eps}$, and the exact frontier is recovered at $\eps=0$.
\end{theorem}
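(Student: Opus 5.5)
The plan is to prove the cellwise sandwich \eqref{eq:sandwich} first and then lift it to the partition frontier \eqref{eq:approxfrontier} exactly as \cref{thm:exact} lifted \cref{lem:width}.

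\emph{Upper bound in \eqref{eq:sandwich}.} Take the encoder-decoder pair for the stacked operator $T_C$ given by \cref{lem:width}, namely the truncated SVD: $\phi_C(x)=E_Cx$ with $E_C$ the top $p$ right singular directions, and a linear decoder $D_C$ with $\sup_{\norm{x}\le1}\norm{T_Cx-D_CE_Cx}_2=\sigma_{p+1}(T_C)$. Let $\psi_u$ be the $u$-block of $D_C$. Each block error is at most the full stacked error, so $\max_{u\in C}\norm{T_ux-\psi_u(\phi_C(x))}_2\le\sigma_{p+1}(T_C)$.

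\emph{Lower bound in \eqref{eq:sandwich}.} Given any continuous $\phi_C$ and decoders $\{\psi_u\}$, define the stacked decoder $\Psi=(\psi_u)_{u\in C}$. Then, for every $x$,
\[
\norm{T_Cx-\Psi(\phi_C(x))}_2^2=\sum_{u\in C}\norm{T_ux-\psi_u(\phi_C(x))}_2^2\le|C|\max_{u\in C}\norm{T_ux-\psi_u(\phi_C(x))}_2^2 .
\]
Taking the supremum over $x$ and applying the lower bound of \cref{lem:width} to $L=T_C$ gives $\sigma_{p+1}(T_C)\le\sqrt{|C|}\,E_p^{\max}(C)$. The only subtlety is that \cref{lem:width} lower-bounds arbitrary continuous decoders, which is why $\Psi$ need not be linear. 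This step is easy; it is the source of the $\sqrt{|C|}$ loss.

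\emph{Partition frontier.} As in \cref{thm:exact}, an advice code with at most $K$ symbols is a partition $\Pcal$ with $|\Pcal|\le K$, and the overall error is the maximum over cells of the cellwise branchwise error; padding with zeros equalizes state dimensions. We translate tolerance into rank counts via the monotone relation $\sigma_{p+1}(L)\le\tau\iff p\ge\rnum_\tau(L)$.

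For the upper bound, fix $\Pcal$ and take $p=\max_C\rnum_\eps(T_C)$. Then $\sigma_{p+1}(T_C)\le\eps$ in every cell, so the cellwise upper bound gives error at most $\eps$.

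For the lower bound, suppose an encoder of dimension $p$ with partition $\Pcal$ achieves error at most $\eps$. Then $E_p^{\max}(C)\le\eps$ for every cell, so $\sigma_{p+1}(T_C)\le\sqrt{|C|}\,\eps$, i.e.\ $p\ge\rnum_{\sqrt{|C|}\eps}(T_C)$. Maximizing over cells and minimizing over partitions gives the left inequality.

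One point needs care: the infimum in $E_p^{\max}$ may not be attained. This is harmless, because the argument uses an actual scheme achieving error $\le\eps$, and the Borsuk--Ulam bound applies to that scheme directly. The $b$-bit statement follows by setting $K=\min(2^b,m)$. At $\eps=0$ both sides of \eqref{eq:approxfrontier} reduce to counting nonzero singular values, which is the rank, recovering \eqref{eq:frontier}.
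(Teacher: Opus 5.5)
Your proposal is correct and matches the paper's own argument: the upper bound restricts the truncated-SVD scheme for the stacked operator $T_C$ to each output block, the lower bound stacks the branch decoders and loses $\sqrt{|C|}$ in converting branchwise error to stacked Euclidean error before applying \cref{lem:width}, and the frontier follows by thresholding with $\sigma_{p+1}(L)\le\tau\iff p\ge\rnum_\tau(L)$ and optimizing over partitions with at most $K$ cells. Beyond that, you only make explicit the thresholding and attainment details that the paper's short proof leaves implicit.
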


\begin{proof}
A decoder for the stacked operator controls each branch, so \cref{lem:width} gives $E_p^{\max}(C)\le\sigma_{p+1}(T_C)$.  Conversely, branchwise error at most $e$ gives stacked Euclidean error at most $\sqrt{|C|}e$; therefore \cref{lem:width} implies $\sigma_{p+1}(T_C)\le\sqrt{|C|}E_p^{\max}(C)$.  Thresholding these inequalities and optimizing over partitions with at most $K$ cells gives \eqref{eq:approxfrontier}.
\end{proof}

\begin{proposition}[The $\sqrt{|C|}$ sandwich is sharp]\label{prop:sharp}
Fix a cell size $q\ge1$ and a retained dimension $p$.  The constants in \eqref{eq:sandwich} cannot be improved uniformly over linear task families.

If $T_1=\cdots=T_q=L$, then
\begin{equation}
 E_p^{\max}(C)=\sigma_{p+1}(L)
 =\frac{\sigma_{p+1}(T_C)}{\sqrt q},
 \label{eq:lowersharp}
\end{equation}
so the lower factor $q^{-1/2}$ is attained.  If $T_1=L$ and $T_2=\cdots=T_q=0$, then
\begin{equation}
 E_p^{\max}(C)=\sigma_{p+1}(L)
 =\sigma_{p+1}(T_C),
 \label{eq:uppersharp}
\end{equation}
so the upper factor is attained.  Whenever $\sigma_{p+1}(L)>0$, both equalities are nontrivial.  In particular, at $K=1$ each endpoint of the numerical-rank bracket in \eqref{eq:approxfrontier} is exact for a suitable task family.
\end{proposition}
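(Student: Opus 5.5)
The plan is to verify the two displayed equalities directly. Each one pairs an upper bound taken from \eqref{eq:sandwich} with a matching lower bound obtained by applying \cref{lem:width} to a single branch. The common step is a branch-restriction observation. For any cell $C$ and any $u\in C$, a branchwise scheme $(\phi_C,\{\psi_u\})$ in particular gives a continuous encoder--decoder pair $(\phi_C,\psi_u)$ for the single operator $T_u$. Hence
\[
E_p^{\max}(C)\ \ge\ \max_{u\in C}\sigma_{p+1}(T_u).
\]
I will use this bound in both cases.

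For \eqref{eq:lowersharp}, I take $T_1=\cdots=T_q=L$. The stacked operator is $T_C=(L,\ldots,L)$, so $T_C^\top T_C=qL^\top L$ and $\sigma_j(T_C)=\sqrt q\,\sigma_j(L)$ for all $j$. The branch-restriction bound gives $E_p^{\max}(C)\ge\sigma_{p+1}(L)$. For the matching upper bound, I use one encoder/decoder pair that is optimal for $L$, namely the truncated SVD from \cref{lem:width}, and apply the same decoder on every branch. This achieves branchwise error $\sigma_{p+1}(L)$. Therefore $E_p^{\max}(C)=\sigma_{p+1}(L)=\sigma_{p+1}(T_C)/\sqrt q$. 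Note that using the upper bound in \eqref{eq:sandwich} here would only give $\sqrt q\,\sigma_{p+1}(L)$, so the explicit shared-decoder construction is needed.

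For \eqref{eq:uppersharp}, I take $T_1=L$ and $T_u=0$ for $u\ge2$. Then $T_C^\top T_C=L^\top L$, so $\sigma_j(T_C)=\sigma_j(L)$. The upper bound $E_p^{\max}(C)\le\sigma_{p+1}(T_C)$ comes from \eqref{eq:sandwich}. The lower bound is the branch-restriction bound applied at $u=1$. The two agree, which gives the equality.

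Nontriviality is immediate. When $\sigma_{p+1}(L)>0$, the two sides of \eqref{eq:sandwich} differ by the factor $\sqrt q$, and each example makes one side tight. So no constant larger than $q^{-1/2}$ can replace the lower factor, and no constant smaller than $1$ can replace the upper factor.

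For the $K=1$ claim, the only partition is $\Pcal=\{\U\}$, so $p^*_{1,\eps}=\min\{p:E_p^{\max}(\U)\le\eps\}$, taking $m=q$. I will also need that the infimum is attained, or at least that the threshold behaves correctly at equality. Attainment holds here because the constructions above achieve the infimum exactly.

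In the identical family, $E_p^{\max}=\sigma_{p+1}(L)\le\eps$ holds iff $\sigma_{p+1}(T_C)\le\sqrt q\,\eps$. So $p^*_{1,\eps}=\rnum_{\sqrt q\eps}(T_C)$, which is the left endpoint of \eqref{eq:approxfrontier}. In the zero-padded family, the same reasoning gives $p^*_{1,\eps}=\rnum_{\eps}(T_C)$, the right endpoint.

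The only step needing care is the shared-decoder upper bound in the identical case. Once that is in place, everything else is routine singular-value bookkeeping.
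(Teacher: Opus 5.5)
Your proposal is correct and follows essentially the same route as the paper. The paper's remark that recovering all identical branches ``is the same problem as recovering $Lx$ once'' amounts to your branch-restriction lower bound combined with the shared truncated-SVD decoder, and its singular-value bookkeeping matches your $T_C^\top T_C$ computation; you additionally make explicit the attainment of the infimum and the $K=1$ endpoint identification, which the paper leaves implicit.
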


\begin{proof}
For identical branches, recovering all tasks is the same problem as recovering $Lx$ once, so \cref{lem:width} gives $E_p^{\max}(C)=\sigma_{p+1}(L)$.  The stacked operator is $T_Cx=(Lx,\ldots,Lx)$, whose nonzero singular values are $\sqrt q$ times those of $L$, proving \eqref{eq:lowersharp}.  With one active branch, the branchwise problem again reduces to $L$, while zero padding leaves the nonzero singular values of the stack unchanged, proving \eqref{eq:uppersharp}.
\end{proof}

\begin{corollary}[Positive-tolerance partition design remains strongly NP-hard]\label{cor:approxhard}
Fix any constant $\eps>0$.  Given an explicitly represented linear task family and advice budget $K$, deciding whether $p^*_{K,\eps}\le B$ is strongly NP-hard.
\end{corollary}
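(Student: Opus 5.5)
We reduce again from \textsc{3-Partition}, reusing the direct-sum construction of \cref{cor:nphard}, but we scale the task operators so that singular values are either well above the tolerance or exactly zero. Then numerical rank and exact rank coincide, and both sides of the bracket \eqref{eq:approxfrontier} reduce to the same makespan. Fix $\eps>0$ and take a \textsc{3-Partition} instance $a_1,\ldots,a_{3J}$ with target $B$. Set $n=\sum_i a_i=JB$, which is polynomially bounded because the problem is strongly NP-complete. Split $\R^n$ into disjoint coordinate blocks $V_i$ of dimension $a_i$. Let task $i$ be $T_i=\lambda P_i$, where $P_i$ is the coordinate projection onto $V_i$ and $\lambda$ is a scalar to be chosen. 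Set $K:=J$.

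The key observation is this. For any cell $C$ the stacked operator $T_C$ has orthogonal row blocks, so its nonzero singular values all equal $\lambda$, with multiplicity $\sum_{i\in C}a_i$. Choose $\lambda>\sqrt{3J}\,\eps$, for instance $\lambda=\ceil{\sqrt{3J}}\ceil{\eps}+1$. This choice exceeds $\sqrt{|C|}\eps$ for every cell, since $|C|\le 3J$. With it,
\[
\rnum_{\eps}(T_C)=\rnum_{\sqrt{|C|}\eps}(T_C)=\sum_{i\in C}a_i
\]
for every $C$. The two sides of \eqref{eq:approxfrontier} therefore coincide, and
\[
p^*_{J,\eps}=\min_{|\Pcal|\le J}\max_{C\in\Pcal}\sum_{i\in C}a_i .
\]
Because $\sum_i a_i=JB$, this min-max is at most $B$ exactly when the integers split into $J$ groups of sum exactly $B$. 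The bounds $B/4<a_i<B/2$ force each group to be a triple. Hence $p^*_{J,\eps}\le B$ if and only if the instance is a yes-instance.

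It remains to check that the reduction is polynomial and that $\eps$ being fixed causes no trouble. The operators are diagonal $0/\lambda$ matrices with $n$ polynomial in the unary input size. The entry $\lambda$ is an integer of size $O(\sqrt J)$ once $\eps$ is a fixed constant, so all numbers stay polynomially bounded and strong hardness is preserved. If one insists on entries in $\{0,1\}$, replace $\lambda P_i$ by $P_i$ stacked $\lceil\lambda^2\rceil$ times. This raises the singular values to $\sqrt{\lceil\lambda^2\rceil}\ge\lambda$ without changing row spaces, at polynomial cost.

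The main obstacle is the gap between the two endpoints of \eqref{eq:approxfrontier}. In general $p^*_{K,\eps}$ is known only up to that sandwich, so a reduction must make the endpoints agree. Choosing $\lambda$ above $\sqrt{\max_C|C|}\,\eps$ is what closes the gap, and the bound $|C|\le m=3J$ keeps that choice uniform over all partitions. Unlike the exact case, the argument does not need membership in NP, which is why the statement claims only strong NP-hardness.
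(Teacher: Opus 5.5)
Your proposal is correct and is essentially the paper's argument. Both reuse the disjoint-coordinate \textsc{3-Partition} reduction and scale every task by $\lambda$ so that all nonzero singular values of every stacked operator exceed $\sqrt{3J}\,\eps$; then error at most $\eps$ on a cell forces retaining its full rank. The only cosmetic differences are that you collapse both endpoints of \eqref{eq:approxfrontier} instead of applying \eqref{eq:sandwich} cell by cell, and that your $\lambda=\lceil\sqrt{3J}\rceil\lceil\eps\rceil+1$ is of order $\sqrt{J}$ rather than the paper's $(\lceil\eps\rceil+1)3J$.
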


\begin{proof}
Use the disjoint-coordinate reduction from \cref{cor:nphard}, with $3J$ tasks and advice budget $K=J$, but multiply every task operator by the integer
\[
 \lambda=(\lceil\eps\rceil+1)3J.
\]
Every nonzero singular value of every stacked operator is then $\lambda$.  If $p<\rank(T_C)$, \eqref{eq:sandwich} gives
\[
 E_p^{\max}(C)\ge \frac{\lambda}{\sqrt{|C|}}
 \ge \frac{\lambda}{\sqrt{3J}}>\eps.
\]
Thus error at most $\eps$ is possible on a cell exactly when its retained dimension is at least its rank.  The approximate instance therefore has the same feasible advice partitions as the exact \textsc{3-Partition} reduction.  The scaling is polynomial in the strongly bounded instance size.
\end{proof}

\section{Three demonstrations of the new frontier}

The examples below are consequences of \cref{thm:exact,thm:approx}; they are not substitutes for them.  Each asks a live resource question in a different field.

\subsection{Long-context attention: a well-conditioned exact memory frontier}

A rank gap is not useful if it disappears numerically.  The following softmax construction is deliberately well conditioned.

Let $s=mg$, $d_k=s$, take keys $k_j=e_j\in\R^s$, and queries
\[
 q_i=\sqrt{s}\,\alpha e_i,
 \qquad \alpha=\log(s+1).
\]
Then scaled dot-product attention has logits $\alpha I_s$ and therefore
\begin{equation}
 A=\operatorname{softmax}(\alpha I_s)
 =\frac12 I_s+\frac{1}{2s}\mathbf1\mathbf1^\top.
\label{eq:softmaxA}
\end{equation}
Its singular values are exactly $1$ once and $1/2$ with multiplicity $s-1$; hence $\kappa_2(A)=2$.  Partition the $s$ attention rows into $m$ disjoint continuation blocks $I_u$ of $g$ rows and let $R_u$ select block $I_u$.  For values with width $d_v$, the task operator is
\[
 \widetilde T_u=I_{d_v}\otimes(R_uA).
\]
Because $A$ is invertible, any stacked set of $|C|$ distinct row blocks has rank $g|C|$, so \cref{thm:exact} gives
\begin{equation}
\boxed{
 p^*(K)=d_vg\ceil{\frac{m}{K}},\qquad 1\le K\le m.}
\label{eq:attnfrontier}
\end{equation}
For $b$ advice bits, set $K=\min(2^b,m)$.
Moreover $A^2\succeq\tfrac14I$.  Every row-selected Gram matrix
$R_CA^2R_C^\top$ therefore has eigenvalues in $[1/4,1]$, so every nonzero singular value of $R_CA$ lies in $[1/2,1]$.  Thus the rank in \eqref{eq:attnfrontier} is stable at every numerical threshold $\tau<1/2$; the gap is not an exact-arithmetic Vandermonde artifact.

For $m=512$, $g=8$, and $d_v=128$:
\begin{center}
\begin{tabular}{c@{\qquad}rrrrr}
\toprule
advance task bits $b$ &0&1&3&6&9\\
\midrule
required coordinates $p_b^*$&524288&262144&65536&8192&1024\\
\bottomrule
\end{tabular}
\end{center}
Nine bits identify one of $512$ continuation blocks and reduce retained state by exactly $512\times$ while the attention operator itself remains uniformly well conditioned.  This is complementary to empirical task-aware KV methods such as DynamicKV \cite{Zhou2025}: the theorem prices an exact pre-continuation state requirement rather than proposing a token-eviction rule.

\subsection{Domain-decomposed digital twins: interface state plus regional innovations}

Consider a finite-element or reduced-order state after domain decomposition,
\[
 x=(x_\Gamma,x_1,\dots,x_m),
 \qquad x_\Gamma\in\R^{r_\Gamma},\quad x_u\in\R^{g_u}.
\]
Here $x_\Gamma$ represents globally coupled interface/trace coordinates and $x_u$ the interior reduced coordinates for subdomain $u$.  Suppose a digital twin is solved once, compressed, and only later asked for detailed state in one region; task $u$ requires
\[
 T_ux=(x_\Gamma,x_u).
\]
This is an idealized coordinate model of the interface/interior structure produced by static condensation and domain-decomposition reductions \cite{ToselliWidlund2005}: every regional query shares the interface state and has a local interior innovation.  We posit the exact direct-sum coordinates here rather than deriving them from a particular discretization.

If $g_u=g$, \cref{cor:core} gives
\begin{equation}
\boxed{
 p^*(K)=r_\Gamma+g\ceil{\frac{m}{K}},\qquad 1\le K\le m.}
\label{eq:digitaltwin}
\end{equation}
For $m=64$, $r_\Gamma=256$, and $g=128$, the exact state drops
\[
 8448\ (b=0)\quad\to\quad1280\ (b=3)\quad\to\quad384\ (b=6).
\]
The final $384=256+128$ coordinates are irreducible: once the region is known, the common interface and that region's local state still have to survive.

The heterogeneous case is more informative.  If regional reduced dimensions are $g_1,\dots,g_m$, then \cref{thm:exact} becomes
\begin{equation}
 p^*(K)=r_\Gamma+
 \min_{\substack{\Pcal\text{ partition of }[m]\\|\Pcal|\le K}}
 \max_{C\in\Pcal}\sum_{u\in C}g_u.
\label{eq:weightedrom}
\end{equation}
Thus the optimal $b$-bit region code should balance \emph{reduced state dimension}, not number of geographic regions.  This gives a direct design rule for memory-limited digital twins: metadata partitions should follow task-conditioned ROM complexity.  Standard domain-decomposition machinery supplies the interface/interior split; the new point is the exact advice--state frontier induced by uncertainty about the future regional query.

\subsection{Hierarchical multi-task representations: task similarity changes the value of bits}

Multi-task representation compression increasingly exploits dependencies among downstream tasks \cite{Huang2026,Nakanoya2021}.  The rank frontier shows exactly how such dependencies alter the value of task information.

Index tasks by $(j,\ell)$ with $j\in[q]$ a task family and $\ell\in[L]$ a task within that family.  Assume their required row spaces have the orthogonal/direct-sum structure
\begin{equation}
 W_{j,\ell}=W_0\oplus G_j\oplus V_{j,\ell},
\qquad
\dim W_0=c,
\quad\dim G_j=h,
\quad\dim V_{j,\ell}=g,
\label{eq:hierarchy}
\end{equation}
with all $G_j$ mutually independent and all $V_{j,\ell}$ mutually independent of each other and of the common/group spaces.  Here $W_0$ is universal task information, $G_j$ is family-level information, and $V_{j,\ell}$ is task-specific information.

\begin{proposition}[Exact two-level task-information frontier]\label{prop:hierarchy}
Assume $h\ge gL$.  For every advice alphabet size $q\le K\le qL$, let
\[
 k_K:=\min\left\{L,\left\lfloor\frac{K}{q}\right\rfloor\right\}.
\]
Then
\begin{equation}
\boxed{
 p^*(K)=c+h+g\ceil{\frac{L}{k_K}}.}
\label{eq:hierarchyfrontier}
\end{equation}
Thus once the advice alphabet is large enough to separate the $q$ task families, the remaining symbols should be distributed as evenly as possible among families.  No closed form is asserted here for $K<q$; those cases are still characterized exactly by \cref{thm:exact}.
\end{proposition}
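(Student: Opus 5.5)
The plan is to apply \cref{thm:exact} directly, after computing $\rank(T_C)$ for an arbitrary cell $C$ from the direct-sum structure \eqref{eq:hierarchy}.

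\textbf{Rank of a cell.} For a cell $C$, let $F(C)\subseteq[q]$ be the set of families it meets. The row space of $T_C$ is $\sum_{(j,\ell)\in C}W_{j,\ell}$. By the assumed independence of $W_0$, the $G_j$, and the $V_{j,\ell}$, this sum equals
\[
W_0\oplus\bigoplus_{j\in F(C)}G_j\oplus\bigoplus_{(j,\ell)\in C}V_{j,\ell}.
\]
Hence
\[
\rank(T_C)=c+h\,|F(C)|+g\,|C|.
\]
This is the one place where the hypotheses on \eqref{eq:hierarchy} enter, and it must be checked carefully. The step needs joint independence of the whole collection $W_0,G_1,\dots,G_q,V_{j,\ell}$, not merely pairwise independence. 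I read the stated assumptions as giving exactly this.

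\textbf{Upper bound.} Fix $K$ with $q\le K\le qL$ and set $k=k_K$. Then $1\le k\le L$. Split each family's $L$ tasks into $k$ groups, each of size at most $\ceil{L/k}$. This uses $qk\le q\lfloor K/q\rfloor\le K$ cells. Every cell lies inside a single family, so by the rank formula each cell has rank at most $c+h+g\ceil{L/k}$. By \cref{thm:exact}, $p^*(K)\le c+h+g\ceil{L/k_K}$.

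\textbf{Lower bound.} Take an arbitrary partition $\Pcal$ with $|\Pcal|\le K$. There are two cases.
\begin{itemize}
\item Some cell meets two families. Then that cell contains at least two tasks, so its rank is at least $c+2h+2g$. Using $h\ge gL$ and $\ceil{L/k}\le L$,
\[
c+2h+2g\ \ge\ c+h+gL+2g\ >\ c+h+g\ceil{L/k_K},
\]
and we are done.
\item Every cell lies inside one family. By pigeonhole, some family $j$ receives at most $\lfloor K/q\rfloor$ cells. Trivially it also receives at most $L$ nonempty cells, so it receives at most $k_K$ cells. Family $j$ has $L$ tasks spread over at most $k_K$ cells, so some cell contains at least $\ceil{L/k_K}$ of them. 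That cell has rank at least $c+h+g\ceil{L/k_K}$.
\end{itemize}
In both cases $\max_{C\in\Pcal}\rank(T_C)\ge c+h+g\ceil{L/k_K}$. Minimizing over $\Pcal$ and invoking \cref{thm:exact} gives \eqref{eq:hierarchyfrontier}.

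\textbf{Main obstacle.} The only delicate point is the mixed-family case. The hypothesis $h\ge gL$ is exactly what makes paying a second group space $G_{j'}$ never cheaper than packing an entire family into one cell. The strict inequality $c+2h+2g>c+h+g\ceil{L/k_K}$ should be verified explicitly, including the boundary case $k_K=1$, where $\ceil{L/k_K}=L$.
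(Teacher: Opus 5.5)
Your proof is correct and follows essentially the same route as the paper: a cell mixing two families has rank at least $c+2h+2g$, which by $h\ge gL$ exceeds the cost $c+h+gL$ of even the coarsest family-respecting partition. For family-respecting partitions, pigeonhole gives a family with at most $\lfloor K/q\rfloor$ cells, and balanced within-family splitting attains the bound. Your explicit formula $\rank(T_C)=c+h\,|F(C)|+g\,|C|$ spells out the joint-independence reading of \eqref{eq:hierarchy} that the paper's proof also uses implicitly. In the mixed case the lower bound only needs the non-strict inequality, and the strict one fails in the degenerate case $g=0$.
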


\begin{proof}
Because $K\ge q$, there is a feasible no-mixing partition that assigns at least one advice cell to each family.  Even the coarsest such construction has maximum rank at most $c+h+gL\le c+2h$.  By contrast, any cell containing tasks from two distinct families contains both independent group spaces and at least two independent task innovations, so its rank is at least $c+2h+2g>c+2h$.  Hence no optimal partition mixes families.

It remains to optimize within the no-mixing class.  If family $j$ receives $k_j\ge1$ cells, balanced partitioning within that family gives maximum rank
\[
 c+h+g\ceil{L/k_j}.
\]
Since $\sum_j k_j\le K$, some family has $k_j\le\lfloor K/q\rfloor=k_K$, which gives the lower bound in \eqref{eq:hierarchyfrontier}.  Distributing the available cells as evenly as possible among the $q$ families and balancing tasks within each family attains it.
\end{proof}

Take $q=8$, $L=16$, $c=64$, $h=256$, and $g=8$.  With no advice, the joint state has rank
\[
 64+8(256)+128(8)=3136.
\]
At $b=3$ bits, the advice can identify the task family and \cref{prop:hierarchy} gives $448$ coordinates.  Additional bits then refine within the family.  The no-advice value is obtained directly from \cref{thm:exact}:
\begin{center}
\begin{tabular}{c@{\qquad}rrrrrr}
\toprule
$b$&0&3&4&5&6&7\\
\midrule
$p_b^*$&3136&448&384&352&336&328\\
\bottomrule
\end{tabular}
\end{center}
The $328=64+256+8$ floor is the exact single-task requirement.  The first three bits are therefore far more valuable than the last four because they resolve a high-dimensional shared task-family component.  The cases $b=1,2$ remain exactly covered by \cref{thm:exact} but are outside the closed-form regime $K\ge q$ of \cref{prop:hierarchy}.  The frontier prices \emph{task dependency geometry}, not task count alone.

\section{Empirical stress tests}\label{sec:experiments}

The theorems are worst-case statements on the unit state ball. The experiments test two narrower questions: whether learned or scientific task banks exhibit the predicted frontiers, and whether task-aware grouping can reduce the retained dimension compared with uninformed grouping. Test data are never used to choose $\eps$, a partition, or $p$.

\subsection{Protocol}

\paragraph{Learned vision operators.}
We use CIFAR-10 \cite{Krizhevsky2009}, Fashion-MNIST \cite{Xiao2017}, and SVHN \cite{Netzer2011}. A fixed ImageNet-pretrained ResNet-50 \cite{He2016} maps each image to a $2048$-dimensional state. For each of three seeds ($11,29,47$), a multiclass linear head is fit on the training representation. Its logits are centered across classes, giving ten scalar one-vs-rest margin operators. Each nonzero task row is divided by its Euclidean norm, so $\eps$ has a uniform operator interpretation: $\eps=0.1$ permits at most ten percent of one task's maximal linear-score magnitude on the unit ball. Fifteen percent of the training set is held out for calibration; the official test set remains untouched until evaluation. We use $K\in\{1,2,4,5,10\}$.

\paragraph{Scientific operators.}
The Burgers experiment uses the standard $256\times100$ shock solution array commonly used with the viscous Burgers benchmark \cite{Raissi2019}. Each time snapshot is scaled into the unit Euclidean ball. Twelve overlapping local spatial averages, equally centered over the domain with window width $0.18$ of the spatial span, define the task bank. The first $70$ snapshots are calibration states and the last $30$ are held out in time. We use $K\in\{1,2,3,4,6,12\}$.

For a fixed partition $\Pcal$, let $e_C(p)$ denote the worst scalar-branch operator norm after retaining the first $p$ right singular directions of $T_C$. We report the \emph{constructive certified dimension}
\[
 p_{\mathrm{cert}}(\Pcal,\eps)=\max_{C\in\Pcal}\min\{p:e_C(p)\le\eps\}.
\]
It is a feasible dimension for that explicit truncated-SVD construction; it need not equal the globally optimal branchwise dimension. The ``upper-bound DP'' solves the finite partition problem with cell cost $\rnum_\eps(T_C)$ exactly by bitmask dynamic programming. We also evaluate a geometry-greedy partition, a spectral partition, and $100$ balanced random partitions per seed. Finite-sample RMSE thresholds are recorded separately in the supplement because they answer a distributional question, not the worst-case theorem.

\subsection{Exact-rank audit}

At $\eps=0$, the lower and upper bounds collapse to rank.  The three vision banks all have joint rank $9$ because centering ten class logits imposes one exact linear dependence.  Their shared sequence
\[
 (K,p^*(K))=(1,9),(2,5),(4,3),(5,2),(10,1)
\]
is therefore an implementation consistency check, not three independent empirical discoveries.  The Burgers bank is structurally different: its twelve local-average operators are numerically independent and give
\[
 (K,p^*(K))=(1,12),(2,6),(3,4),(4,3),(6,2),(12,1).
\]
The complete exact-frontier plot is moved to the supplement.  The empirical content of the learned task banks begins when $\eps>0$, where their singular spectra and downstream margins differ.

\subsection{Approximation makes task grouping matter}

The Burgers bank gives the clearest nontrivial approximate regime. At $\eps=0.35$ and $K=2$, the theorem interval for the upper-bound-optimal partition is
\[
4\le p^*_{2,0.35}\le5.
\]
The explicit truncated-SVD construction reaches $p_{\mathrm{cert}}=4$, attaining the theorem lower bound. Across $300$ balanced random partitions (100 for each seed), the median certified dimension is $6$ and the 2.5--97.5\% range is $5$--$6$. At $p=4$, the held-out worst standardized RMSE is $0.0368$ and the relative $\ell_2$ error across the physical task outputs is $0.0718$. Thus the saving is not obtained by accepting large test error: the structured partition stores four coordinates where a typical random grouping requires six.

\begin{figure}[t]
\centering
\includegraphics[width=.78\linewidth]{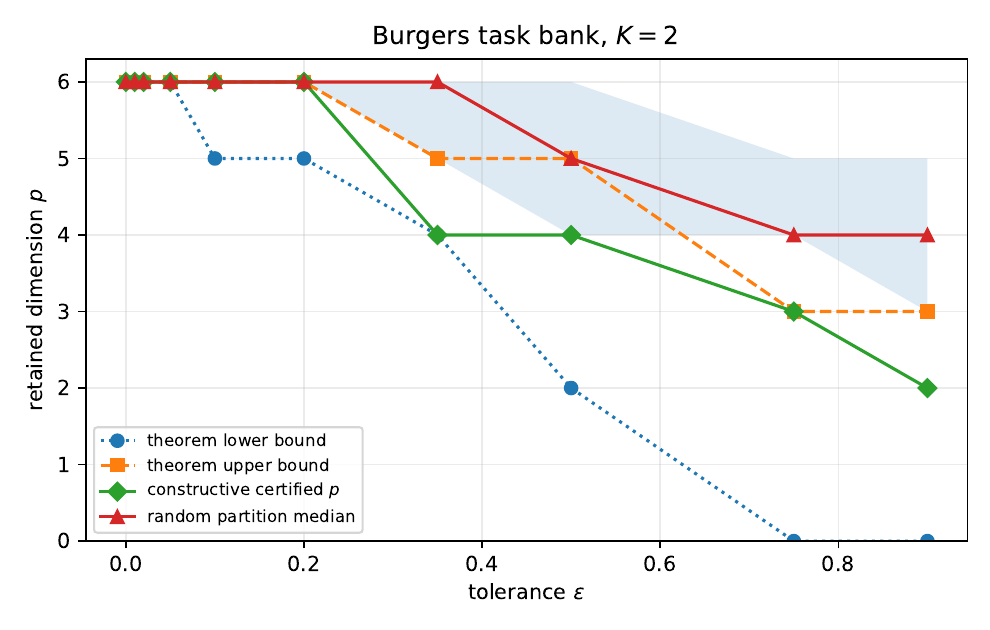}
\caption{Approximate Burgers frontier for $K=2$. The plot separates the theorem interval from a constructive certificate and from the distribution of uninformed task groupings. At $\eps=0.35$, the certificate reaches $p=4$ while the random-partition median is $6$.}
\label{fig:burgfrontier}
\end{figure}

The same point is visible directly in the error curve. At $K=2$, increasing $p$ from $3$ to $4$ crosses the uniform operator tolerance: the branch error falls from $0.5295$ to $0.3206$. The held-out worst RMSE at $p=4$ is $0.0368$, well below the declared operator tolerance; see \cref{fig:burgerror}. The theorem controls all states in the unit ball, whereas the test trajectory occupies only a small subset of that ball.

\begin{figure}[t]
\centering
\includegraphics[width=.76\linewidth]{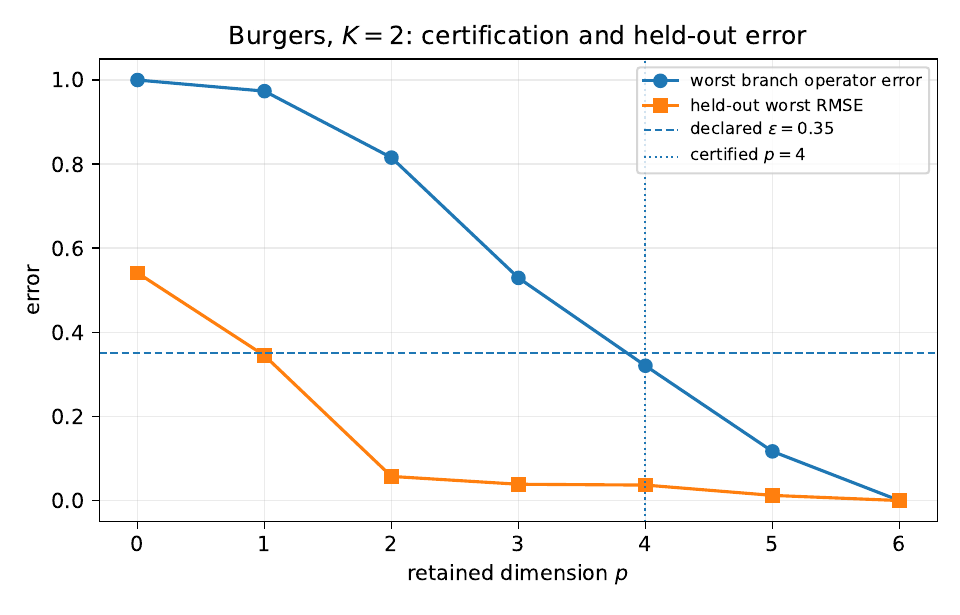}
\caption{Burgers, $K=2$, $\eps=0.35$. The vertical line marks the constructive certified dimension. Uniform branch error and held-out trajectory error answer different questions and are therefore reported separately.}
\label{fig:burgerror}
\end{figure}

\subsection{Learned task banks: certified score preservation is not label preservation}

The vision operators enter a nontrivial approximate regime at larger tolerance. For $K=2$ and $\eps=0.5$, the exact retained dimension is $5$ on all three datasets, while the constructive certificate uses $p=4$. The theorem intervals are $[0,5]$ for CIFAR-10 and $[1,5]$ for Fashion-MNIST and SVHN. At that certified dimension, held-out worst standardized score RMSE is $0.0107$, $0.0333$, and $0.00376$, respectively. Top-1 accuracy drops by $1.69$, $11.0$, and $2.55$ percentage points.

\begin{table}[t]
\centering
\caption{Learned vision task banks at $K=2$, $\eps=0.5$. ``Top-1 drop'' is in percentage points and is evaluated at the constructive certified dimension.}
\label{tab:visionapprox}
\small
\begin{tabular}{@{}lcccc@{}}
\toprule
dataset & theorem interval & $p_{\rm cert}$ & worst test RMSE & top-1 drop \\
\midrule
CIFAR-10 & $[0,5]$ & 4 & 0.0107 & 1.69 \\
Fashion-MNIST & $[1,5]$ & 4 & 0.0333 & 11.00 \\
SVHN & $[1,5]$ & 4 & 0.00376 & 2.55 \\
\bottomrule
\end{tabular}
\end{table}

This variation is informative. The theorem certifies the linear task scores; an argmax classifier can still be sensitive when class margins are small. Conversely, a finite data distribution can look much easier than the unit-ball problem. In the raw runs, a validation-RMSE selector sometimes chose $p=0$ even while classification accuracy collapsed. We therefore do not use that quantity as evidence for the theorem. It is reported in the supplement as a separate distributional notion of sufficiency. The experiment makes the distinction concrete: \emph{uniform task preservation, average score fidelity, and downstream decision stability are not interchangeable}.

\subsection{What the experiments establish}

The empirical evidence supports three limited claims. First, the implementation recovers the exact rank law on both centered-logit and PDE task banks; for vision this is deliberately treated as an algebraic audit because the shared rank-$9$ structure is imposed by logit centering. Second, the approximate interval is operationally nontrivial: the Burgers construction can hit the lower bound, and informed grouping can require fewer coordinates than random grouping at the same uniform tolerance. Third, worst-case and distributional sufficiency separate in practice. None of these observations enlarges the theorem; they show where its resource quantity is useful and where another quantity must be introduced.

\section{Relation to existing theory and novelty boundary}

Several neighboring theories supply essential ingredients.  Predictive-state representations characterize state by predictions of future tests and relate linear state dimension to rank \cite{LittmanSuttonSingh2001}.  Functional and multi-functional source coding study how much communication is needed to compute one or several functions, often with decoder side information and stochastic sources \cite{FeiziMedard2009}.  Task-aware co-design shows, in a linear single-task setting, that a representation of dimension $\rank(K)$ can suffice for a task operator $K$ \cite{Nakanoya2021}.  Linear sketching and workload-aware query mechanisms construct one compressed representation or strategy that supports a declared workload \cite{Woodruff2014,LiHayRastogiMiklauMcGregor2010}.  Nonlinear-width theory supplies the single-operator identity used in \cref{lem:width} \cite{DeVoreHowardMicchelli1989,Pinkus1985}.  Recent systems work adapts compression to the downstream task in KV memory and multi-task visual representation coding \cite{Zhou2025,Huang2026}.

Our contribution is narrower than all of these and different in ordering.  The source $x$ is fully available; only a bounded message about the \emph{future task} is known when the state is formed; and the exact task is disclosed after compression.  The state is priced by continuous coordinate dimension under worst-case exact or uniform approximate recovery.  Unlike a fixed known workload, the workload itself is only partially resolved at encoding time.  In this model, the per-cell width identity is classical, while \cref{thm:exact} identifies the joint task-information--state frontier as an optimal partition by stacked rank.  \Cref{cor:core,prop:overlap} quantify the effect of shared and overlapping task geometry, \cref{thm:approx,prop:sharp} give a robust singular-value frontier with globally sharp constants, and \cref{cor:nphard,prop:ptas,cor:approxhard} separate exact computational hardness from approximability on a natural direct-sum class.  We are not aware of an earlier result giving this advice-cardinality/stacked-rank frontier under the stated information order.  The novelty claim is restricted to this model and does not claim priority over task-aware compression, functional coding, predictive state, workload sketching, or classical scheduling approximation.

\section{Scope and conclusion}

The theorem prices continuous state coordinates, not finite-precision bits. The task family is finite and linear. Nonlinear tasks require a different complexity object---for example a local Jacobian family, a nonlinear width, or a task-defined quotient geometry. Optimal advice design is strongly NP-hard in general, but \cref{prop:ptas} shows why hardness alone is not the right practical conclusion: a heterogeneous direct-sum family has the same weighted frontier and admits a PTAS. What remains open is a comparable approximation theory for genuinely overlapping task geometries.

These limits leave the central result intact. When the state is formed under partial knowledge of its future use, the information that must survive is exactly the joint task-visible subspace inside each unresolved advice cell. More task information is useful only when it separates tasks whose joint row span is expensive. A common task core cannot be removed; bounded overlap places the frontier near a weighted load-balancing law; task-specific innovations can be serialized as uncertainty about the future task is resolved. In the approximate regime, singular spectra replace exact rank, and \cref{prop:sharp} shows that the $\sqrt{|C|}$ gap cannot be removed without structural assumptions.

The next mathematical targets are therefore sharper structure-dependent approximate frontiers for correlated task families, approximation guarantees beyond the direct-sum regime, and nonlinear task families with a certifiable analogue of joint task-visible dimension. The present paper fixes the linear benchmark against which those extensions can be measured.

\section*{Data and code availability}
The submission package contains the experiment code, fixed configuration, post-processing script, result snapshots used for every reported empirical number, and compilation scripts. The code accepts a data root at run time and does not require hard-coded local paths. The reported vision experiments use CIFAR-10, Fashion-MNIST and SVHN; the scientific experiment uses a standard Burgers shock array. The supplied result manifests record software versions, seeds, split rules, and the fact that held-out test data were not used to select tolerance, partition, or retained dimension.

\end{document}